\documentclass[11pt,a4paper]{article}

\usepackage[utf8]{inputenc}
\usepackage[T1]{fontenc}
\usepackage{times}
\usepackage[margin=1in]{geometry}
\usepackage{amsmath,amssymb,amsthm}
\usepackage{booktabs}
\usepackage{graphicx}
\usepackage{subcaption}
\usepackage{xcolor}
\usepackage{listings}
\usepackage{enumitem}
\usepackage{hyperref}
\usepackage{url}
\usepackage{natbib}
\usepackage{authblk}
\usepackage{float}
\usepackage{multirow}
\usepackage{tabularx}

\hypersetup{
  colorlinks=true,
  linkcolor=blue!70!black,
  citecolor=blue!70!black,
  urlcolor=blue!70!black,
}

\lstdefinestyle{protocol}{
  basicstyle=\rmfamily\small,
  keywordstyle=\bfseries,
  commentstyle=\itshape,
  stringstyle=\itshape,
  breaklines=true,
  frame=single,
  backgroundcolor=\color{gray!5},
  columns=fullflexible,
  keepspaces=true,
  numbers=none,
  xleftmargin=3pt,
  xrightmargin=3pt,
  language=Python,
  morekeywords={def,return,for,if,in,range,True,False,None},
}

\newcommand{\dci}{\textsc{DCI}}
\newcommand{\dcicf}{\textsc{DCI-CF}}
\newcommand{\atoa}{\textsc{A2A}}
\newcommand{\mcp}{\textsc{MCP}}
\newcommand{\jamjet}{\textsc{jamjet}}

\newcommand{\eg}{\textit{e.g.}}

\newcommand{\etal}{\textit{et al.}}

\newcommand{\act}[1]{\textsc{#1}}  

\newtheorem{theorem}{Theorem}
\newtheorem{definition}{Definition}

\newtheorem{remark}{Remark}

\begin{document}

\title{\textbf{From Debate to Deliberation:\\Structured Collective Reasoning with Typed Epistemic Acts}}

\author[1]{Sunil Prakash}
\affil[1]{Indian School of Business, India \\ \textit{sunil\_prakash\_pgpmax2026@isb.edu}}

\date{}
\maketitle


\begin{abstract}
Multi-agent LLM systems increasingly tackle complex reasoning tasks, yet their interaction patterns remain limited to parallel generation with voting, unstructured debate, or rigid pipeline orchestration. None of these paradigms model \emph{deliberation}---a phased process in which differentiated participants exchange typed reasoning moves, preserve disagreements as productive tensions, admit new evidence under controlled rules, and converge on an explicit, accountable outcome. We introduce \textbf{Deliberative Collective Intelligence (\dci{})}, a framework that treats collective reasoning as a first-class computational object. \dci{} specifies (1)~a delegate model with four reasoning archetypes, (2)~a phased session model, (3)~an interaction grammar of 14 typed epistemic acts organized in a three-layer model, (4)~a shared workspace for structured collective thought, and (5)~\dcicf{}, a convergent flow algorithm with formal termination guarantees that always produces a structured decision packet---including the selected option, residual objections, minority report, and reopen conditions---even under persistent disagreement. We implement \dci{} on the \jamjet{} agent runtime using Gemini 2.5 Flash and evaluate on 45 tasks across seven domains---including software architecture, policy analysis, hidden-profile integration, late-evidence revision, risk analysis, disagreement resolution, and routine decisions as a negative control---organized around four hypotheses. We find that on non-routine tasks ($n=40$), \dci{} significantly improves over unstructured debate ($+0.95$, 95\% CI $[+0.41, +1.54]$), indicating that deliberative structure matters when multiple agents interact (H1). \dci{} excels on hidden-profile tasks requiring integration of partial perspectives (9.56, the highest score of any system on any domain) and significantly outperforms all baselines on such tasks, while failing on routine decisions (5.39), confirming strong task-dependence (H2). \dci{} produces 100\% structured decision packets and 98\% minority reports---process artifacts absent from all baselines. However, \dci{} is expensive: it consumes ${\sim}62\times$ the tokens of a single agent, and single-agent generation significantly outperforms \dci{} on overall quality ($-0.60$, CI $[-1.06, -0.15]$), indicating that structured deliberation is not justified for routine tasks (H3). Component contributions are not clearly separable at our sample size (H4). \dci{}'s contribution is not that more agents are better, but that consequential decisions---especially those requiring integration of partial information, multi-stakeholder reasoning, and explicit risk surfacing---benefit from deliberative structure when process quality and accountability matter enough to justify the cost.
\end{abstract}


\section{Introduction}
\label{sec:intro}

Large language models (LLMs) have demonstrated remarkable reasoning capabilities when applied individually, yet many consequential decisions---architectural design, policy analysis, strategic planning---benefit from multiple perspectives, structured argumentation, and explicit consideration of tradeoffs. The multi-agent paradigm promises to deliver these benefits by assembling multiple LLM instances into collaborative systems. In practice, however, current multi-agent approaches operate through surprisingly limited interaction patterns.

Prior multi-agent LLM systems mainly aggregate, debate, or orchestrate. \dci{} instead models collective reasoning as \emph{deliberation}: a phased process in which differentiated delegates exchange typed epistemic acts, preserve tensions, admit new evidence, and still close with an explicit result through bounded convergence. Deliberation, in this sense, is a distinct computational primitive---not a refinement of debate, not an optimization of voting, but a qualitatively different way of organizing multi-agent interaction.

To make this distinction precise, consider the four dominant paradigms:

\begin{itemize}[leftmargin=*,itemsep=3pt]
  \item \textbf{Ensembling} (self-consistency, best-of-$N$) generates independent reasoning paths and selects among them~\citep{wang2023selfconsistency}. There is no interaction between paths, no mutual refinement, and no record of why paths disagreed.

  \item \textbf{Debate} allows agents to argue freely over answers across rounds~\citep{du2023debate, liang2023encouraging}. Interaction exists, but it is untyped: a challenge is indistinguishable from a proposal at the protocol level. There is no phased progression, no structured workspace, and no guarantee that disagreement is preserved rather than flattened.

  \item \textbf{Orchestration} (AutoGen, MetaGPT, CrewAI) chains agents in workflows with role assignments~\citep{wu2023autogen, hong2023metagpt, crewai2024}. The focus is task decomposition and handoff, not reasoning interaction. A MetaGPT pipeline does not distinguish between a proposal and a challenge, and it does not preserve dissent.

  \item \textbf{Voting} collects independent judgments and selects by majority or judge~\citep{irving2018ai}. It aggregates preferences but does not transform them through engagement.
\end{itemize}

\noindent None of these paradigms produce what we term \emph{deliberated intelligence}: decisions that emerge from structured examination where assumptions are surfaced, dissent is preserved, reasoning is typed and traceable, and the process guarantees a bounded, explicit outcome. \dci{} fills this gap by treating deliberation as a first-class computational object---a session with typed acts, phased progression, a shared workspace, tension tracking, and a convergent algorithm that always terminates with a structured decision packet.

\paragraph{Contributions.} This paper makes the following contributions:
\begin{enumerate}[leftmargin=*,itemsep=2pt]
  \item We \textbf{define deliberative collective intelligence} as a distinct interaction paradigm for multi-agent LLM systems, distinguishing it from ensembling, debate, orchestration, and voting.
  \item We introduce \textbf{\dci{}}, a session-based framework with differentiated delegate archetypes, a phased session model, an interaction grammar of 14 typed epistemic acts, explicit tension tracking through a shared workspace, and structured decision packets.
  \item We propose \textbf{\dcicf{}}, a convergent deliberation algorithm that preserves epistemic openness while guaranteeing bounded procedural closure---every session terminates with a decision packet containing the selected option, residual objections, minority report, and reopen conditions.
  \item We present an \textbf{empirical evaluation} on 45 tasks across seven domains organized around four specific hypotheses, showing that structured deliberation significantly improves over unstructured debate on non-routine tasks and excels on hidden-profile perspective-integration tasks, while exposing substantial efficiency tradeoffs against simpler alternatives and confirming task-dependence through a routine negative control.
\end{enumerate}

\noindent Figure~\ref{fig:worked-example} provides a compact end-to-end walkthrough of a \dci{} session, illustrating how typed acts, tension preservation, and the decision packet work in practice.

\subsection{What is New in DCI?}
\label{sec:novelty}

\dci{}'s contribution is not any single component but their combination into a coherent deliberation protocol. Five elements, taken together, distinguish \dci{} from all prior multi-agent LLM systems:

\begin{enumerate}[leftmargin=*,itemsep=3pt]
  \item \textbf{Typed epistemic interaction.} Agents exchange structured reasoning moves---\act{propose}, \act{challenge}, \act{bridge}, \act{synthesize}---not undifferentiated text. The protocol distinguishes a challenge from a proposal at the structural level, enabling enforceable discourse rules and analyzable interaction patterns.

  \item \textbf{Session-based deliberation.} Collective thinking is organized into phases (arrival, independent thought, mutual engagement, collective shaping, closure), not just rounds. Phases create a deliberate arc from divergence through engagement to convergence, preventing premature consensus.

  \item \textbf{Tensions as first-class objects.} Disagreements are preserved in the shared workspace, not flattened by majority rule or lost in transcript accumulation. Tensions carry structure: which positions conflict, what evidence supports each side, and what would resolve the disagreement.

  \item \textbf{Bounded openness.} New evidence and hypotheses can enter the deliberation, but under controlled admission rules (materiality, distinctness, evidence linkage, and a cutoff round). This prevents both premature closure and endless expansion.

  \item \textbf{Guaranteed procedural convergence.} Every \dci{} session terminates with a structured \emph{decision packet}---the selected option, residual objections, a minority report preserving dissent, and explicit reopen conditions---even under persistent disagreement. The convergence guarantee is procedural, not epistemic: \dci{} guarantees a fair and bounded process, not truth or optimality.
\end{enumerate}

\noindent \dci{}'s novelty lies in combining typed epistemic interaction, session-based deliberation, explicit tension preservation, bounded openness to new hypotheses, and guaranteed procedural closure into a single protocol for collective reasoning. No prior multi-agent LLM system integrates all five.


\section{Background and Related Work}
\label{sec:related}

\dci{} draws on multi-agent LLM systems, ensemble reasoning, and deliberative governance theory. We review each strand, positioning \dci{} as a synthesis that addresses gaps left by each individually.

\subsection{Multi-Agent Debate}

Du~\etal{}~\citep{du2023debate} demonstrated that multi-agent debate---where LLM instances argue over answers across multiple rounds---improves factual accuracy and mathematical reasoning over single-agent baselines. Liang~\etal{}~\citep{liang2023encouraging} further showed that encouraging divergent thinking in multi-agent debate reduces sycophantic convergence. Irving~\etal{}~\citep{irving2018ai} proposed AI safety via debate for scalable oversight.

These approaches demonstrate the value of multi-agent interaction but lack formal structure: agents communicate through free-form text, there is no typed grammar constraining interaction patterns, no phased progression from exploration to convergence, and no guarantee of termination with a structured outcome.

\subsection{Multi-Agent Frameworks}

AutoGen~\citep{wu2023autogen} provides a group-chat abstraction with a manager agent mediating turns. CAMEL~\citep{li2023camel} uses role-playing for cooperative interaction. MetaGPT~\citep{hong2023metagpt} assigns software engineering roles and coordinates through structured outputs. CrewAI~\citep{crewai2024} defines tasks and roles for multi-agent workflows.

These frameworks advance multi-agent coordination infrastructure, but they focus on \emph{task decomposition and orchestration} rather than \emph{deliberation}. An AutoGen group chat does not distinguish between a proposal and a challenge. A MetaGPT pipeline does not preserve dissent or surface hidden assumptions. \dci{} addresses a complementary need: principled structure for the \emph{reasoning interaction} itself.

\subsection{Ensemble Methods and Self-Consistency}

Wang~\etal{}~\citep{wang2023selfconsistency} introduced self-consistency, which samples multiple reasoning paths and selects the most common answer. Chen~\etal{}~\citep{chen2023frugalgpt} and Ong~\etal{}~\citep{ong2024routerllm} explored model routing and cascading. These ensemble approaches improve accuracy through diversity but operate \emph{independently}---there is no interaction between paths, no mutual refinement, and no examination of why paths disagree. \dci{} generates diversity through differentiated delegates and then \emph{uses disagreement productively} through structured challenge and synthesis.

\subsection{Deliberative Democracy and Speech Act Theory}

Habermas's theory of communicative action~\citep{habermas1984theory} argues that legitimate collective decisions emerge from discourse governed by procedural norms. Fishkin's deliberative polling~\citep{fishkin2018democracy} demonstrates empirically that structured deliberation produces more informed collective judgments than unstructured discussion or voting.

Speech act theory~\citep{austin1962speech, searle1969speech} provides the linguistic foundation for typed interaction moves: utterances are \emph{acts}, not merely information transfer, and the discourse structure should reflect the type of act being performed. \dci{} adapts these principles for LLM agents, accounting for their specific characteristics: high fluency but bounded reasoning, tendency toward sycophantic agreement, and differentiability through system prompts.

\subsection{Social Choice and Arrow's Impossibility}
\label{sec:social-choice}

Arrow's impossibility theorem~\citep{arrow1951social} establishes that no rank-order voting system can satisfy a small set of fairness criteria simultaneously. \dcicf{}'s convergence mechanism acknowledges this: rather than claiming optimal aggregation, it guarantees \emph{termination with transparency}. When a decision is forced rather than emergent, the minority report and reopen conditions make the procedural nature of the outcome explicit.

\subsection{Positioning}
\label{sec:gap-summary}

The preceding review reveals that current multi-agent LLM systems lack: (1)~a typed interaction grammar that distinguishes reasoning moves at the protocol level; (2)~a termination guarantee with transparent forced-decision procedures; (3)~first-class dissent preservation; (4)~a structured workspace rather than flat transcripts; and (5)~a principled session model that applies phased deliberation to LLM collectives. \dci{} addresses all five gaps within a single framework.


\section{The \dci{} Framework}
\label{sec:framework}

\dci{} models multi-agent reasoning as governed discourse among a small council of differentiated delegates. The framework comprises four components: a delegate model, a session model, an interaction grammar, and a shared workspace.

\subsection{Delegate Model}
\label{sec:delegates}

Without differentiated delegates, collective intelligence collapses into duplicated intelligence. Each delegate carries an identity, a reasoning style, a known limitation, a current perspective (which evolves during deliberation), and a willingness to revise.

\dci{} defines four core \textbf{archetypes} that provide complementary cognitive functions:

\begin{itemize}[leftmargin=*,itemsep=2pt]
  \item \textbf{Framer} ($\delta_F$): Defines the real problem. Clarifies ambiguity, identifies hidden dimensions, decomposes mixed issues, and determines what questions actually need answering.
  \item \textbf{Explorer} ($\delta_E$): Generates novel possibilities. Proposes unconventional paths, fresh structures, analogies, and generative expansions. Opens the solution space before the group narrows it.
  \item \textbf{Challenger} ($\delta_C$): Pressure-tests everything. Searches for hidden assumptions, weak logic, risks, blind spots, and overconfidence.
  \item \textbf{Integrator} ($\delta_I$): Combines the group's thinking into coherent direction. Identifies common patterns, synthesizes positions, manages session coherence, and builds the emerging center of gravity.
\end{itemize}

Formally, a delegate $\delta_i$ maintains local state $\sigma_i = \langle v_i, c_i, Q_i, R_i, H_i \rangle$ where $v_i$ is the current view, $c_i \in [0,1]$ is confidence, $Q_i$ is the set of open questions, $R_i$ is the set of active concerns, and $H_i$ is the history of position shifts. This state evolves through interaction: a delegate that receives a strong challenge and updates its position records the shift in $H_i$ and adjusts $c_i$ accordingly.

Archetypes constrain \emph{tendency}, not \emph{capability}: a Challenger can still propose, and an Explorer can still challenge. The archetype biases the distribution of interaction acts, implemented through differentiated system prompts.

\subsection{Session Model}
\label{sec:sessions}

A \dci{} session $\mathcal{S}$ is a bounded collaborative thinking event in which a small council of delegates works on one problem. The session unfolds through five phases:

\begin{enumerate}[leftmargin=*,itemsep=2pt]
  \item \textbf{Arrival} ($\phi_1$): The session grounds itself. The group identifies the central question, notes scope and boundaries, names uncertainties, and establishes an exploratory tone. Output: shared problem statement.
  \item \textbf{Independent First Thought} ($\phi_2$): Each delegate contributes its initial view \emph{before being shaped by others}: how it sees the problem, what it thinks matters most, what it suspects is hidden, one possible direction. This preserves diversity before social influence.
  \item \textbf{Mutual Engagement} ($\phi_3$): The heart of the session. Delegates respond to one another through typed interaction acts: extend, question, challenge, bridge, clarify, reframe, deepen. The goal is \emph{movement}---improved ideas, clearer tensions, and genuine synthesis opportunities.
  \item \textbf{Collective Shaping} ($\phi_4$): The group turns discourse into shared structure: recurring themes, strongest ideas, most important tensions, what seems central, what can be discarded. Output: common ground, live tensions, 2--4 candidate paths, and a preferred direction.
  \item \textbf{Closure} ($\phi_5$): The session closes with intellectual honesty and practical usefulness: current synthesis, remaining uncertainty, key tensions, action suggestions, and carry-forward memory.
\end{enumerate}

The phase structure creates a deliberate arc from \emph{divergence} (phases 1--2) through \emph{engagement} (phase 3) to \emph{convergence} (phases 4--5), preventing premature consensus.

\subsection{Interaction Grammar}
\label{sec:grammar}

In \dci{}, a delegate does not merely ``send a message.'' It performs one or more \emph{typed epistemic acts}---meaningful moves in collective reasoning. The grammar operates at three layers:

\begin{itemize}[leftmargin=*,itemsep=2pt]
  \item \textbf{Layer 1---Speech Mode} $\mu$: The stance of the move: exploratory, analytical, critical, integrative, reflective, or decisional.
  \item \textbf{Layer 2---Interaction Act} $\alpha$: The core move from a vocabulary of 14 typed acts (Table~\ref{tab:acts}).
  \item \textbf{Layer 3---Intent} $\iota$: The specific purpose: test assumption, open option, resolve ambiguity, connect ideas, support convergence.
\end{itemize}

\begin{table}[t]
\centering
\caption{The 14 core epistemic acts in \dci{}'s interaction grammar, organized by family.}
\label{tab:acts}
\small
\begin{tabular}{@{}clll@{}}
\toprule
\textbf{\#} & \textbf{Act} & \textbf{Family} & \textbf{Description} \\
\midrule
1  & \act{frame}      & Orienting    & Define how to view the problem \\
2  & \act{propose}    & Generative   & Put forward a candidate idea or path \\
3  & \act{clarify}    & Orienting    & Remove ambiguity or distinguish concepts \\
4  & \act{ask}        & Critical     & Open a useful question \\
5  & \act{challenge}  & Critical     & Test weakness, assumption, or consequence \\
6  & \act{extend}     & Generative   & Build on another idea \\
7  & \act{reframe}    & Orienting    & Shift the level or angle of understanding \\
8  & \act{bridge}     & Integrative  & Connect two ideas or positions \\
9  & \act{synthesize} & Integrative  & Summarize where the group seems to be \\
10 & \act{ground}     & Epistemic    & Anchor a point in evidence or constraint \\
11 & \act{update}     & Epistemic    & Revise one's own position \\
12 & \act{recommend}  & Decisional   & Suggest a next direction or action \\
13 & \act{spawn}      & Generative   & Propose a sub-session for a sub-problem \\
14 & \act{recall}     & Integrative  & Incorporate a sub-session's result \\
\bottomrule
\end{tabular}
\end{table}

A complete interaction move is a triple $m = (\mu, \alpha, \iota)$ targeting either the problem, another delegate's contribution, or the shared workspace. For example, $m = (\text{critical},\allowbreak \text{\act{challenge}},\allowbreak \text{test hidden assumption})$ targeting $m_{31}$ represents a Challenger pressure-testing a specific proposal.

The grammar distinguishes \textbf{soft moves} (exploratory, tentative---appropriate early in deliberation) from \textbf{hard moves} (decisive, committal---appropriate later), creating natural progression from open exploration to convergent decision-making.

Certain acts naturally invite specific responses, forming a \textbf{response grammar}: a \act{challenge} invites defend, refine, update, or concede; a \act{synthesize} invites affirm, sharpen, surface omission, or recommend.

\subsubsection{Design Rationale for the 14 Acts}
\label{sec:act-rationale}

The 14 acts are organized into six families, each serving a distinct cognitive function in collective reasoning. The design principle is coverage: removing any family creates a specific failure mode in the deliberation process.

\begin{itemize}[leftmargin=*,itemsep=2pt]
  \item \textbf{Orienting acts} (\act{frame}, \act{clarify}, \act{reframe}) establish and refine the problem definition. Without them, the group risks shallow or misdirected problem definitions---solving the wrong problem with high confidence.
  \item \textbf{Generative acts} (\act{propose}, \act{extend}, \act{spawn}) increase the search breadth of the collective. Without them, the option space is limited to whatever the first speaker suggests, producing premature lock-in.
  \item \textbf{Critical acts} (\act{ask}, \act{challenge}) surface hidden assumptions, weak logic, and overlooked risks. Without them, proposals pass unexamined, leading to false agreement where the group converges on an option whose weaknesses were never tested.
  \item \textbf{Integrative acts} (\act{bridge}, \act{synthesize}, \act{recall}) prevent fragmentation by connecting ideas and building shared understanding. Without them, the deliberation accumulates positions without combining them, producing a list of views rather than a collective judgment.
  \item \textbf{Epistemic acts} (\act{ground}, \act{update}) expose confidence levels and enable genuine revision. Without them, delegates assert positions without anchoring them in evidence or acknowledging when a challenge has changed their view, producing fake certainty.
  \item \textbf{Decisional acts} (\act{recommend}) enable closure by proposing concrete next steps. Without them, deliberation can cycle indefinitely through analysis without arriving at actionable outcomes.
\end{itemize}

Table~\ref{tab:act-families} summarizes this structure.

\begin{table}[t]
\centering
\caption{Act families, their cognitive functions, and the deliberation failure that occurs when the family is absent.}
\label{tab:act-families}
\small
\begin{tabular}{@{}lll@{}}
\toprule
\textbf{Act Family} & \textbf{Cognitive Function} & \textbf{Failure if Absent} \\
\midrule
Orienting   & Problem framing       & Shallow or misdirected problem definition \\
Generative  & Search breadth        & Limited option space, premature lock-in \\
Critical    & Assumption testing    & False agreement, unexamined proposals \\
Integrative & Synthesis             & Fragmentation, list of views without judgment \\
Epistemic   & Confidence \& revision & Fake certainty, no genuine belief update \\
Decisional  & Closure               & Endless discussion, no actionable outcome \\
\bottomrule
\end{tabular}
\end{table}

\subsection{Shared Workspace}
\label{sec:workspace}

A \dci{} session maintains a shared workspace $\mathcal{W}$---not merely a transcript, but a structured evolving thought-space with six sections:

\begin{enumerate}[leftmargin=*,itemsep=2pt]
  \item \textbf{Problem View}: The group's current understanding of what the problem is.
  \item \textbf{Key Frames}: Different valid perspectives on the problem.
  \item \textbf{Emerging Ideas}: Candidate concepts, approaches, or hypotheses.
  \item \textbf{Tensions}: Disagreements, ambiguities, trade-offs, and unresolved questions, captured as first-class objects.
  \item \textbf{Synthesis in Progress}: What appears to be converging.
  \item \textbf{Next Actions}: Possible follow-up tasks or decisions.
\end{enumerate}

The workspace prevents repetition, makes progress visible, and gives the session a visible center of gravity. Critically, tensions are \emph{preserved} rather than resolved prematurely---the workspace explicitly tracks open disagreements, preventing the group from faking consensus.


\section{\dcicf{}: Convergent Flow Algorithm}
\label{sec:dcicf}

\dcicf{} (Deliberative Collective Intelligence---Convergent Flow) is the algorithm that guarantees every \dci{} session terminates with a result. Its design principle is: \emph{do not force minds to agree; force the process to close fairly.}

\subsection{Algorithm Overview}

\dcicf{} proceeds through eight stages (Figure~\ref{fig:dcicf-flow}):

\begin{figure}[t]
\centering
\includegraphics[width=0.95\columnwidth]{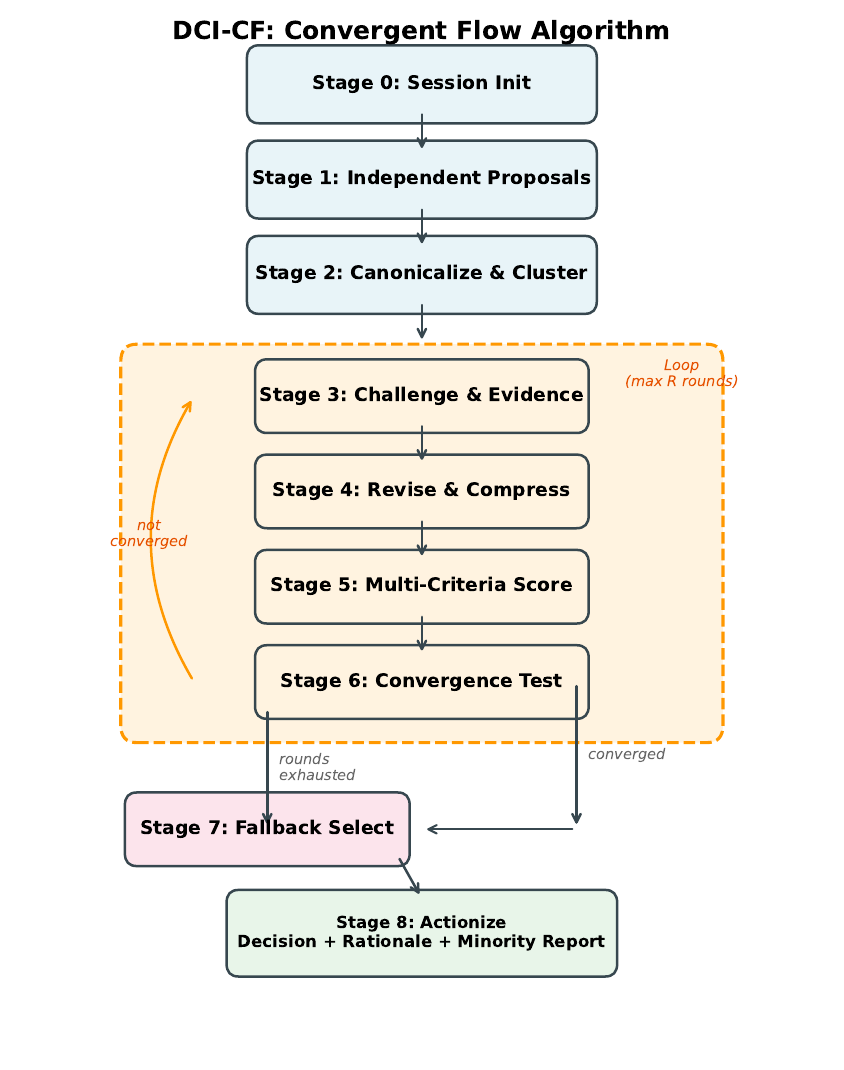}
\caption{\dcicf{} algorithm flow. Stages 3--6 form a loop bounded by \textit{max\_rounds}. If convergence fails after round exhaustion, Stage 7 provides a deterministic fallback. Every path terminates at Stage 8 with a structured decision packet.}
\label{fig:dcicf-flow}
\end{figure}

\noindent\textbf{Stage 0---Session Initialization.} Set the session envelope: problem statement $P$, delegates $\Delta = \{\delta_1, \ldots, \delta_n\}$, round budget $R_{\max}$, evaluation criteria $C = \{c_1, \ldots, c_p\}$, maximum options $K_{\max}$, finalist count $M$, convergence margin $\epsilon$, and fallback rule $\mathcal{F}$.

\noindent\textbf{Stage 1---Independent Proposal Generation.} Each delegate privately submits its framing, hypotheses, concerns, confidence level, and suggested evaluation criteria. Independent first input reduces dominance, conformity, and rhetorical capture. Output: raw hypothesis pool $\mathcal{H}$.

\noindent\textbf{Stage 2---Canonicalization and Clustering.} The algorithm deduplicates semantically equivalent ideas, splits overloaded proposals, and groups similar hypotheses into ``option families.'' Output: a finite candidate set $O = \{o_1, o_2, \ldots, o_k\}$ with $k \le K_{\max}$.

\noindent\textbf{Stage 3---Structured Challenge and Evidence.} For each option, delegates contribute support, challenge, evidence, counterexample, revision suggestion, or uncertainty note through typed epistemic acts. New hypotheses may enter only if materially distinct, plausibly superior, evidence-linked, and submitted before a cutoff round. Output: for each option, a structured record of pros, cons, assumptions, evidence, and risks.

\noindent\textbf{Stage 4---Revision and Option Compression.} Options are revised in light of criticism: refine, merge, narrow scope, split into variants, or discard if dominated. The set is compressed by removing strictly dominated options and merging compatible variants. Output: finalist set $F = \{f_1, \ldots, f_m\}$ where $m \le M$.

\noindent\textbf{Stage 5---Multi-Criteria Scoring.} Delegates evaluate finalists against the explicit criteria. Each delegate $\delta_d$ provides per-criterion scores with confidence $c_d$, evidence strength $e_d$, and rationale. The aggregate score for option $o$ is:
\begin{equation}
\label{eq:scoring}
\text{Total}(o) = \sum_{c \in C} w_c \sum_{d \in \Delta} \left[ s_{d,o,c} \cdot c_d \cdot e_d \cdot \phi_d \right]
\end{equation}
where $w_c$ is the criterion weight, $s_{d,o,c}$ is the raw score, and $\phi_d$ is a domain-fit factor reflecting the delegate's relevance to criterion $c$.

\noindent\textbf{Stage 6---Convergence Test.} The algorithm tests whether sufficient convergence has been achieved via any of: (a) \emph{score dominance}---the top option exceeds the second by margin $\epsilon$; (b) \emph{majority backing}---the top option has support above a threshold; or (c) \emph{no blocking objection}---all remaining objections are non-fatal. If none hold and rounds remain, return to Stage 3. If rounds are exhausted, proceed to Stage 7.

\noindent\textbf{Stage 7---Forced-Decision Fallback.} This guarantees termination. The algorithm applies, in sequence: (1) outranking (option winning the most pairwise comparisons), (2) minimax regret, (3) robust satisficing, (4) Integrator selection from the top-2. The output includes the chosen option, why it won \emph{procedurally}, what objections remain, and what would change the decision.

\noindent\textbf{Stage 8---Actionization and Carry-Forward.} The session closes with a structured decision packet (Definition~\ref{def:decision-packet}).

\subsection{Decision Packet}
\label{sec:decision-packet}

\begin{definition}[Decision Packet]
\label{def:decision-packet}
Every \dcicf{} session terminates with a \textbf{decision packet} $\mathcal{D}$, a structured record containing:
\begin{enumerate}[leftmargin=*,itemsep=1pt]
  \item \textbf{Selected option} with rationale and supporting evidence
  \item \textbf{Residual objections}---challenges that were raised but not resolved
  \item \textbf{Minority report}---positions held by dissenting delegates, including their reasoning and confidence levels
  \item \textbf{Next actions}---concrete follow-up steps derived from the selected option
  \item \textbf{Reopen triggers}---conditions under which the decision should be reconsidered (\eg{}, new evidence, changed assumptions, threshold events)
\end{enumerate}
\end{definition}

\noindent The decision packet is the primary output of \dci{}. It captures not only the decision but the \emph{epistemic state} of the collective at closure: what was agreed, what was contested, what was left unresolved, and what would change the outcome.

\subsection{Guarantees and Non-Guarantees}
\label{sec:guarantees}

The distinction between what \dci{} guarantees and what it does not is central to understanding its contribution.

\paragraph{DCI guarantees:}
\begin{itemize}[leftmargin=*,itemsep=1pt]
  \item \textbf{Session termination.} Every session ends in bounded time (Theorem~\ref{thm:termination}).
  \item \textbf{Bounded deliberation.} The number of rounds, options, and recursive sessions are all finitely bounded.
  \item \textbf{Explicit result packet.} Every session produces a structured decision packet (Definition~\ref{def:decision-packet}), even when natural convergence fails.
  \item \textbf{Preserved minority report.} Dissenting positions are recorded with rationale and confidence, never silently discarded.
  \item \textbf{Reopen conditions.} The decision packet specifies conditions under which the decision should be revisited.
\end{itemize}

\paragraph{DCI does \emph{not} guarantee:}
\begin{itemize}[leftmargin=*,itemsep=1pt]
  \item \textbf{Truth.} Structured deliberation cannot overcome factual deficiencies in the delegate model.
  \item \textbf{Unanimity.} Persistent disagreement is a valid outcome, preserved in the minority report.
  \item \textbf{Optimality.} The selected option is the procedural winner, not provably the best possible answer.
  \item \textbf{Superior performance on all tasks.} As our experiments confirm (Section~\ref{sec:results}), \dci{}'s coordination overhead can exceed its benefits for tasks where a single coherent generation suffices.
\end{itemize}

\noindent \dci{} does not guarantee truth or consensus. It guarantees a fair and bounded process that transforms divergent perspectives into an explicit, actionable outcome with transparent provenance.

\subsection{Convergence Theorem}
\label{sec:convergence}

\begin{theorem}[Termination]
\label{thm:termination}
For any \dci{} session with finite delegate set $|\Delta| = n$, maximum rounds $R_{\max}$, maximum options $K_{\max}$, and maximum recursion depth $D_{\max}$, \dcicf{} terminates in at most
\[
T_{\max} = R_{\max} \cdot \left(\sum_{d=0}^{D_{\max}} B_d\right)
\]
rounds, where $B_d$ is the maximum number of sessions at depth $d$.
\end{theorem}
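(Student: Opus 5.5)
The plan is to bound the rounds each session spends in the loop, then sum over the recursion tree of sessions. The first step is to show that a single session, ignoring sub-sessions, uses at most $R_{\max}$ rounds of the Stage~3--6 loop. Stages 0, 1, 2 and 8 run once each and are not counted as loop rounds; if they are, they add a constant absorbed into the per-session budget. Stage~2 produces a finite candidate set $O$ with $|O| \le K_{\max}$. Stage~4 only refines, merges, splits into bounded variants, or discards, and returns $|F| \le M$. So every round processes finitely many options, each with contributions from $n$ delegates. This makes each round a finite computation, with the LLM calls treated as atomic steps that return.

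Stage~6 either exits to Stage~8 or returns to Stage~3. The return is allowed only while the round counter, which increases by one per pass, is below $R_{\max}$. Hence after at most $R_{\max}$ passes, control reaches either Stage~8 or Stage~7. Stage~7 is a fixed sequence of four deterministic rules, each evaluated over the finite finalist set: outranking, minimax regret, robust satisficing, and Integrator selection from the top-2. The last rule always yields an option, so Stage~7 finishes without consuming further deliberation rounds and hands control to Stage~8. Thus each session, with its own sub-sessions excluded, terminates within $R_{\max}$ rounds and emits a decision packet.

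The second step handles recursion through \act{spawn} and \act{recall}. The sessions form a rooted tree, and every spawned session sits one level deeper than its parent. Because depth is capped at $D_{\max}$, no spawn happens at depth $D_{\max}$. By hypothesis, at most $B_d$ sessions exist at depth $d$, with $B_0 = 1$. The termination of each session is established bottom-up, by induction on $D_{\max} - d$. A depth-$D_{\max}$ session spawns nothing, so the first step applies directly. A session at depth $d$ waits only on finitely many children, each of which terminates by the inductive hypothesis, so its \act{recall} steps return and the first step applies again. The total round count is then at most $\sum_d B_d \cdot R_{\max}$, which equals $T_{\max}$.

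The main obstacle is justifying that the $B_d$ are finite and well defined, and that a parent's rounds are not inflated while it awaits children. For finiteness I would argue that spawns happen only as typed acts inside rounds. At most $n$ delegates act in each of $R_{\max}$ rounds, so $B_{d+1} \le B_d \cdot n R_{\max}$ times the allowed number of spawns per move, which is finite. For the waiting issue, I would adopt the convention that rounds are counted per session, so the bound is a sum over sessions rather than a product. I would remark that a purely sequential schedule attains this bound and a parallel one only improves it.
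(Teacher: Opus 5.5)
Your argument is correct and is essentially the paper's: a per-session cap of $R_{\max}$ iterations of the Stage~3--6 loop, a finite option set, a Stage~7 cascade whose last rule (Integrator selection from the top-2) always yields a winner, and depth-bounded recursion. Your explicit per-session summation with bottom-up induction on depth actually derives the stated $T_{\max}$, which the paper's proof states but never computes, concluding only plain termination. The one divergence is how you make the $B_d$ finite. You rely on a per-move spawn limit that the paper never states, and your $n R_{\max}$ count of moves per session undercounts, since a delegate may issue several acts per round, one per option in Stage~3. You could instead cite the paper's tree-wide ceiling of 50 rounds with budget carving: because every session uses at least one round, it caps the total number of sessions directly.
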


\begin{proof}[Proof sketch]
The proof relies on six hard constraints:

\emph{(1) Finite rounds.} Each session executes at most $R_{\max}$ iterations of the Stage 3--6 loop.

\emph{(2) Finite option space.} Stage 2 caps options at $K_{\max}$; Stage 4 compresses to at most $M$ finalists. The option set is non-increasing across rounds.

\emph{(3) Hypothesis cutoff.} New hypotheses cannot enter after round $R_{\max} - 1$, ensuring the option set stabilizes.

\emph{(4) Structured scoring.} Stage 5 maps all positions to comparable numeric scores, making disagreement resolvable by comparison.

\emph{(5) Deterministic fallback.} Stage 7 provides a total ordering over finalists through a cascade of resolution methods, guaranteeing exactly one winner.

\emph{(6) Bounded recursion.} Recursive sessions are depth-limited ($D_{\max}$, default 2) and budget-carved, with a tree-wide ceiling on total rounds (default: 50).

Since every execution path either converges at Stage 6 (within $R_{\max}$ rounds) or terminates at Stage 7, and recursive spawns are bounded, the algorithm terminates. \qed
\end{proof}

\begin{remark}
Theorem~\ref{thm:termination} guarantees \emph{termination}---every \dcicf{} session produces a decision in bounded time---but makes no claim about \emph{decision quality}. Quality depends on delegate capabilities, domain knowledge, and archetype adherence. The convergence algorithm ensures the process closes fairly; whether the outcome is good depends on the substance of the deliberation.
\end{remark}

\subsection{Complexity Analysis}

\textbf{Round complexity.} Each round involves $O(n \cdot k)$ delegate-option interactions in Stage 3, $O(k^2)$ pairwise comparisons in Stage 4, and $O(n \cdot m \cdot p)$ scoring evaluations in Stage 5. Total per-session: $O(R_{\max} \cdot n \cdot k \cdot p)$.

\textbf{Token complexity.} Each delegate produces $O(L)$ tokens per interaction, where $L$ is the response length bound. Total tokens per session: $O(R_{\max} \cdot n \cdot k \cdot L)$. With default parameters ($R_{\max}=2$, $n=4$, $k=5$, $L \approx 2000$), this yields approximately 80K tokens per session---though observed mean usage (238K tokens) exceeds this estimate due to multi-stage prompts, workspace state, and context accumulation across rounds.

\textbf{Recursion overhead.} With maximum depth $D_{\max}$ and at most $B$ child sessions per level, worst-case total rounds are $R_{\max} \cdot (1 + B)^{D_{\max}}$, bounded by the tree-wide ceiling of 50 rounds.


\section{Implementation}
\label{sec:implementation}

We implement \dci{} as a workflow on the \jamjet{} agent runtime~\citep{jamjet2024}, an open-source system for composing autonomous agents with built-in support for agent-to-agent communication via \atoa{}~\citep{google2024a2a} and tool integration via \mcp{}~\citep{anthropic2024mcp}.

\subsection{Architecture}

The implementation maps directly onto \jamjet{}'s workflow model:

\begin{itemize}[leftmargin=*,itemsep=2pt]
  \item \textbf{Delegates as agents.} Each archetype is a \jamjet{} agent with a differentiated system prompt encoding its reasoning orientation and behavioral guidelines.
  \item \textbf{\dcicf{} as workflow graph.} The eight stages are expressed as a \jamjet{} workflow graph with conditional edges (Stage 6 loops back to Stage 3 or advances to Stage 7/8). Session state is maintained as workflow state.
  \item \textbf{Workspace as structured state.} The shared workspace is a structured JSON document within the workflow state, updated through typed epistemic acts. Each act modifies a specific workspace section rather than appending to a flat transcript.
  \item \textbf{Grammar enforcement.} Delegate outputs are parsed against the interaction move schema. Invalid moves are rejected and the delegate is re-prompted.
\end{itemize}

\subsection{Model Configuration}

All delegate agents use Gemini 2.5 Flash~\citep{google2024gemini} accessed through the Google Generative Language API via its OpenAI-compatible endpoint, with \textit{reasoning\_effort=``none''} to disable thinking tokens, \textit{max\_tokens=16384}, and temperature 0.7. A minimum 4-second delay between API requests enforces rate limiting. For LLM-as-judge evaluation, we use Gemini 3 Flash Preview~\citep{google2024gemini} (temperature 0.2) with structured rubrics to enable cost-effective automated assessment with a more capable model than the delegates themselves.

Delegate system prompts encode both the archetype orientation and grammar constraints. Each prompt specifies the delegate's cognitive focus, preferred act types, known limitations, and instructions for producing well-formed interaction moves with the three-layer structure (mode, act, intent). All conditions achieved 100\% task completion rate.

\subsection{Worked Example}
\label{sec:worked-example}

To illustrate how \dci{} produces structured deliberation rather than unstructured chat, Figure~\ref{fig:worked-example} traces a condensed session on an architectural design task.

\begin{figure}[t]
\centering
\fbox{\parbox{0.95\columnwidth}{\footnotesize
\textbf{Problem.} Design a high-throughput event processing pipeline handling 100K events/sec with exactly-once delivery semantics and a complete audit trail. The team has 4~engineers and a 3-month deadline.

\medskip
\textbf{Phase 1---Independent First Thought} (4 delegates, 8 moves). Selected moves:

\smallskip
\begin{tabular}{@{}p{0.12\columnwidth}p{0.14\columnwidth}p{0.66\columnwidth}@{}}
\textit{Actor} & \textit{Act} & \textit{Summary} \\[2pt]
Framer & \act{clarify} & Does ``exactly-once'' mean delivery to ingestion, or end-to-end including all downstream side effects? This distinction drives the entire architecture. \\[2pt]
Explorer & \act{propose} & Treat the event stream as an append-only immutable database (blockchain analogy): each batch cryptographically linked, providing inherent audit trail and unique event IDs for exactly-once. \\[2pt]
Challenger & \act{challenge} & Challenges the implicit assumption that exactly-once delivery \emph{and} complete audit trail are achievable at 100K~evt/s within 3~months with 4~engineers. ``Exactly-once is notoriously difficult.'' [\textit{hard move}] \\[2pt]
Integrator & \act{frame} & Frames the core tension: ambitious correctness requirements vs.\ severe resource/time constraints. Should we determine if ``at-least-once with idempotent processing'' is a more realistic goal? [\textit{hard move}]
\end{tabular}

\medskip
\textbf{Phase 2---Mutual Engagement} (4 moves). Selected moves:

\smallskip
\begin{tabular}{@{}p{0.12\columnwidth}p{0.14\columnwidth}p{0.66\columnwidth}@{}}
Framer & \act{reframe} & Distinguishes ``exactly-once \emph{delivery}'' (unachievable in distributed systems) from ``exactly-once \emph{processing}'' (idempotent consumers). Reframing to the latter simplifies the problem. [\textit{hard move}] \\[2pt]
Challenger & \act{challenge} & Challenges the immutable-database option: building a blockchain-like ledger for 100K~evt/s is a massive undertaking---risks overruns and brittleness with a 4-person team.
\end{tabular}

\medskip
\textbf{Tensions surfaced} (tracked as first-class workspace objects):
\begin{itemize}[leftmargin=1.5em,itemsep=1pt,topsep=2pt]
\item[\textbullet] Correctness vs.\ feasibility: exactly-once at 100K~evt/s vs.\ 3-month deadline / 4~engineers.
\item[\textbullet] Elegance vs.\ risk: immutable-ledger approach is conceptually clean but may be impractical.
\end{itemize}

\medskip
\textbf{Convergence.} Score dominance after round~1. Winning option: redefine delivery guarantees to ``exactly-once processing'' via Kafka + idempotent consumers (at-least-once delivery, application-level deduplication). Immutable-database option retained in minority report.

\medskip
\textbf{Decision packet} (abbreviated):
\begin{itemize}[leftmargin=1.5em,itemsep=1pt,topsep=2pt]
\item[\textbullet] \textit{Decision:} Shift to exactly-once \emph{processing} with idempotent consumers on a managed Kafka backbone.
\item[\textbullet] \textit{Minority report:} Immutable-database option remains conceptually superior for auditability but impractical within constraints.
\item[\textbullet] \textit{Reopen conditions:} (1)~requirement explicitly demands true exactly-once \emph{delivery}; (2)~throughput requirement increases 10$\times$; (3)~timeline or team size reduced further.
\end{itemize}

\smallskip
\textit{Session: 12 moves, 2 rounds, 26.5K tokens, convergence via score dominance (no forced fallback).}
}}
\caption{Condensed \dci{} session on an architectural design task. Four archetypes exchange typed epistemic acts across two phases, surfacing tensions that are preserved as first-class objects. The session converges on a pragmatic reframing while retaining the minority position and specifying reopen conditions---artifacts absent from single-agent or debate outputs.}
\label{fig:worked-example}
\end{figure}


\section{Experimental Setup}
\label{sec:experiments}

We evaluate \dci{} against four baselines across seven domains (45 tasks), with ablation conditions to probe component contributions. Rather than framing the evaluation as a broad comparison, we organize it around four specific hypotheses that the data can support or refute.

\subsection{Hypotheses}

\begin{itemize}[leftmargin=*,itemsep=3pt]
  \item \textbf{H1: Structured deliberation improves over unstructured multi-agent debate.} If the \emph{structure} of multi-agent interaction matters, \dci{} should outperform free-form debate among the same number of agents on the same tasks.
  \item \textbf{H2: \dci{} especially helps on tasks requiring perspective integration and multi-stakeholder analysis.} If deliberative structure adds value specifically through its challenge mechanisms, tension preservation, and multi-perspective reasoning, \dci{}'s advantage should be largest on hidden-profile, risk-heavy, and policy tasks---and smallest (or negative) on routine tasks.
  \item \textbf{H3: \dci{} incurs substantial coordination overhead and is not efficient for routine tasks.} Structured deliberation requires multi-stage, multi-agent interaction. If this overhead is real, \dci{} should consume substantially more tokens than simpler approaches, and single-agent generation should achieve competitive quality at a fraction of the cost.
  \item \textbf{H4: Different \dci{} components matter differently across task classes.} If the framework's components (archetypes, typed grammar, convergence algorithm) serve distinct functions, removing each should produce different effects, potentially varying by task type.
\end{itemize}

\subsection{Evaluation Domains}

We evaluate on 45 tasks across seven domains, including a negative control, to test task-dependence:

\paragraph{Domain 1: Software Architecture (10 tasks).}
Complex design tasks with multiple valid approaches, hidden tradeoffs, and no single correct answer. Tasks span distributed systems, data modeling, API design, and infrastructure choices.

\paragraph{Domain 2: Policy Analysis (10 tasks).}
Inherently multi-perspective problems requiring structured reasoning about value tradeoffs. Tasks span technology policy, organizational governance, and societal impact assessment.

\paragraph{Domain 3: Hidden-Profile (5 tasks).}
Tasks where the correct answer requires combining partial information distributed across perspectives---no single viewpoint has the full picture. These test whether deliberative structure helps integrate fragmented knowledge.

\paragraph{Domain 4: Late-Evidence (5 tasks).}
Tasks requiring revision of an initial assessment after new information arrives. These test \dci{}'s bounded-openness mechanism for admitting new evidence under controlled rules.

\paragraph{Domain 5: Risk Analysis (5 tasks).}
Tasks centered on comprehensive risk identification, where the primary evaluation dimension is surfacing hidden assumptions, second-order effects, and failure modes.

\paragraph{Domain 6: Disagreement-Heavy (5 tasks).}
Tasks with genuinely competing valid positions where reasonable experts would disagree. These test whether \dci{}'s tension preservation and minority report mechanisms add value.

\paragraph{Domain 7: Routine (5 tasks, negative control).}
Straightforward tasks with clear correct approaches that should not benefit from multi-agent deliberation. This domain tests H3's prediction that \dci{}'s overhead is not justified for simple decisions.

\subsection{Baselines}

\begin{itemize}[leftmargin=*,itemsep=2pt]
  \item \textbf{B1: Single Agent.} One LLM (same model as \dci{} delegates) given the full problem with careful-reasoning instructions and structured output format.
  \item \textbf{B2: Unstructured Debate.} Four LLMs communicating via free-form messages with no grammar, no phases, no workspace, and no \dcicf{}.
  \item \textbf{B3: Simple Voting.} Four LLMs independently produce answers; an LLM judge selects the best.
  \item \textbf{B4: Self-Consistency}~\citep{wang2023selfconsistency}. Single LLM generates multiple reasoning paths; best answer selected.
\end{itemize}

\subsection{Ablation Conditions}

\begin{itemize}[leftmargin=*,itemsep=2pt]
  \item \textbf{A1: No Archetypes.} All four delegates are generic reasoning agents with no archetype specialization. Same \dcicf{} flow.
  \item \textbf{A2: No Typed Grammar.} Delegates communicate in free-form text but still follow \dcicf{}'s staged process.
  \item \textbf{A3: No \dcicf{}.} Delegates have archetypes and grammar but deliberate freely for a fixed number of rounds with no structured convergence. Final answer extracted by an LLM summarizer.
  \item \textbf{A4: No Workspace.} Delegates have archetypes, grammar, and \dcicf{} but no shared workspace. They see the full transcript instead.
\end{itemize}

\subsection{Metrics}

\paragraph{Primary metrics.} \emph{Decision quality}: LLM-as-judge scoring on a 1--10 rubric covering completeness, reasoning quality, risk identification, tradeoff articulation, and actionability. \emph{Reasoning depth}: count of identified tradeoffs, risks, and assumptions. \emph{Perspective coverage}: count of distinct viewpoints surfaced.

\paragraph{Secondary metrics.} Total tokens consumed, wall-clock time, rounds to convergence, convergence method (natural vs.\ forced fallback), and quality-per-token ratio.

\subsection{Evaluation Protocol}

All system outputs are evaluated blind: the LLM judge does not know which system produced which output. For each task, all systems' outputs are collected and evaluated with the same rubric. We use Gemini 3 Flash Preview~\citep{google2024gemini} as the LLM judge with structured evaluation prompts that score each dimension independently before producing an aggregate score.

\subsection{Statistical Methodology}

We use paired comparisons (each task evaluated by all systems), bootstrap 95\% confidence intervals (10{,}000 resamples) for all reported metrics, Wilcoxon signed-rank tests for significance, and Holm-\v{S}id\'{a}k correction for multiple comparisons.


\section{Results}
\label{sec:results}

We present results organized by hypothesis, using the same data and tables but framing each finding as evidence for or against a specific claim.

\subsection{H1: Structured Deliberation Improves Over Unstructured Debate}

Table~\ref{tab:main-results} presents the main comparison across all seven evaluation domains. The core test of H1 is the comparison between \dci{} and unstructured debate (B2), which isolates the effect of deliberative structure while holding the number of agents constant.

\begin{table}[t]
\centering
\caption{Main results across all seven evaluation domains ($N=$ number of task evaluations per condition). Overall quality, risk identification, reasoning depth, and actionability scored 1--10 by LLM-as-judge (Gemini 3 Flash Preview). Bold indicates best in column.}
\label{tab:main-results}
\small
\begin{tabular}{@{}lccccc@{}}
\toprule
\textbf{System} & $N$ & \textbf{Overall} & \textbf{Risk ID} & \textbf{Depth} & \textbf{Actionability} \\
\midrule
\dci{} (Ours)           & 45 & 8.24 & 8.84 & 8.62 & 8.96 \\
B1: Single Agent        & 45 & \textbf{8.84} & \textbf{9.42} & \textbf{9.00} & 9.00 \\
B3: Simple Voting       & 45 & 8.78 & 8.66 & 8.67 & \textbf{9.44} \\
B4: Self-Consistency    & 45 & 8.65 & 8.37 & 8.53 & 9.32 \\
B2: Unstructured Debate & 45 & 7.75 & 8.39 & 7.72 & 8.76 \\
\bottomrule
\end{tabular}
\end{table}

\paragraph{Evidence for H1.} On the full task set ($n=45$), \dci{} scores 0.49 points higher than unstructured debate on overall quality (8.24 vs.\ 7.75). However, bootstrap 95\% confidence intervals (10{,}000 resamples) show this overall difference is \emph{not} statistically significant: $\Delta = +0.49$, 95\% CI $[-0.10, +1.12]$. The reason is informative: routine tasks (Domain 7) drag down \dci{}'s average substantially (5.39 on routine vs.\ 8.58 for debate), diluting the signal from domains where deliberative structure genuinely helps.

On \textbf{non-routine tasks} ($n=40$), the picture changes: \dci{} scores $+0.95$ over debate, 95\% CI $[+0.41, +1.54]$---\textbf{statistically significant}. This is the most direct test of whether deliberative structure matters when the task warrants it: the same number of agents, the same model, the same tasks---the only difference is the presence of typed acts, phased sessions, a shared workspace, and a convergence algorithm.

\paragraph{Context.} H1 is supported on non-routine tasks but \emph{not} on the full task set, confirming that deliberative structure helps selectively. Single-agent generation now significantly outperforms \dci{} overall ($\Delta = -0.60$, 95\% CI $[-1.06, -0.15]$). Simple voting (8.78) and self-consistency (8.65) also score higher. The data show that \dci{}'s value is task-dependent: it improves over debate on perspective-heavy and process-sensitive tasks, but its failure on routine tasks confirms that structured deliberation is not a general-purpose improvement. \dci{}'s value must be assessed domain-by-domain, which we examine next.

\subsection{H2: DCI Especially Helps on Perspective-Integration and Process-Sensitive Tasks}

Table~\ref{tab:domain-split} breaks results by domain across all seven evaluation categories. We examine whether \dci{}'s advantage concentrates on the task types where deliberative structure should matter most.

\begin{table}[t]
\centering
\caption{Overall quality scores by domain. Bold indicates best in each column. $n$ = tasks per domain.}
\label{tab:domain-split}
\small
\begin{tabular}{@{}lccccccc@{}}
\toprule
\textbf{System} & \rotatebox{70}{\textbf{Hidden-Prof.}} & \rotatebox{70}{\textbf{Late-Evid.}} & \rotatebox{70}{\textbf{Risk}} & \rotatebox{70}{\textbf{Disagree.}} & \rotatebox{70}{\textbf{Policy}} & \rotatebox{70}{\textbf{Arch.}} & \rotatebox{70}{\textbf{Routine}} \\
 & \textit{n=5} & \textit{n=5} & \textit{n=5} & \textit{n=5} & \textit{n=10} & \textit{n=10} & \textit{n=5} \\
\midrule
\dci{} (Ours)    & \textbf{9.56} & 9.24          & 8.48          & 8.15          & 8.55          & 8.13          & 5.39 \\
Single Agent     & 9.25          & \textbf{9.60} & \textbf{8.85} & \textbf{8.87} & 8.82          & 8.73          & 7.88 \\
Voting           & 9.30          & 9.26          & 8.03          & 8.68          & 8.26          & \textbf{9.19} & 8.86 \\
Self-Consistency & 9.08          & 8.80          & 8.77          & 7.83          & \textbf{8.97} & 8.22          & \textbf{8.96} \\
Unstr.\ Debate   & 9.03          & 8.45          & 7.83          & 8.24          & 8.03          & 5.78          & 8.58 \\
\bottomrule
\end{tabular}
\end{table}

\paragraph{Evidence for H2: hidden-profile tasks.} \dci{}'s strongest domain is hidden-profile tasks (9.56)---the highest score of any system on any domain. Hidden-profile tasks require integrating partial information that no single perspective possesses, and this is exactly where differentiated delegates and structured engagement should help. \dci{} significantly outperforms single-agent generation on hidden-profile tasks ($\Delta = +0.31$, 95\% CI $[+0.12, +0.49]$)---the \emph{only} domain where \dci{} beats the single agent. \dci{} also significantly outperforms self-consistency ($\Delta = +0.48$) and debate ($\Delta = +0.53$) on this domain.

\paragraph{Evidence for H2: process-sensitive tasks.} On process-sensitive tasks (architecture + policy, $n=20$), DCI--Debate is $+1.44$, 95\% CI $[+0.57, +2.43]$---statistically significant. On the standard architecture domain, debate degrades severely (5.78) while \dci{} maintains 8.13, a gap of $+2.36$.

\paragraph{Evidence for H2: routine negative control.} \dci{} scores 5.39 on routine tasks---significantly lower than \emph{every} baseline (DCI--Debate: $-3.19$, CI $[-4.25, -2.11]$). This negative control confirms strong task-dependence: \dci{}'s deliberative machinery actively \emph{harms} output quality on straightforward tasks, consistent with H2's prediction that \dci{}'s value is domain-specific.

\paragraph{Process metrics.} Table~\ref{tab:process-metrics} shows that \dci{}'s structural differentiation extends beyond quality scores. \dci{} produces decision artifacts that no baseline provides.

\begin{table}[t]
\centering
\caption{Process metrics across systems. Decision packet completeness, minority report presence, and reopen conditions presence are percentages. Explicit assumptions and risk/objection counts are means per task.}
\label{tab:process-metrics}
\small
\begin{tabular}{@{}lccccc@{}}
\toprule
\textbf{Metric} & \textbf{\dci{}} & \textbf{SA} & \textbf{Debate} & \textbf{Voting} & \textbf{SC} \\
\midrule
Decision packet     & \textbf{100\%} & 1\%  & 8\%  & 16\% & 0\% \\
Minority report     & \textbf{98\%}  & 0\%  & 0\%  & 0\%  & 0\% \\
Reopen conditions   & \textbf{100\%} & 0\%  & 0\%  & 0\%  & 0\% \\
Explicit assumptions & \textbf{3.6} & 3.3  & 0.2  & 0.0  & 0.0 \\
Risk count          & 4.3            & \textbf{10.5} & 3.5  & 4.9  & 3.4 \\
Objection count     & 3.8            & \textbf{7.3}  & 1.1  & 2.4  & 2.2 \\
\bottomrule
\end{tabular}
\end{table}

\subsection{H3: Substantial Coordination Overhead}

Table~\ref{tab:efficiency} presents efficiency metrics. The token costs test whether \dci{}'s deliberative structure comes at a substantial cost.

\begin{table}[t]
\centering
\caption{Efficiency metrics across systems. Mean tokens per task across all agents and rounds.}
\label{tab:efficiency}
\small
\begin{tabular}{@{}lrrr@{}}
\toprule
\textbf{System} & \textbf{Mean Tokens} & \textbf{Quality} & \textbf{Quality/kToken} \\
\midrule
B1: Single Agent        & 3,809    & \textbf{8.84} & \textbf{2.320} \\
B2: Unstructured Debate & 9,458    & 7.75 & 0.819 \\
B4: Self-Consistency    & 21,249   & 8.65 & 0.407 \\
B3: Simple Voting       & 31,987   & 8.78 & 0.274 \\
\dci{} (Ours)           & 237,565  & 8.24 & 0.035 \\
\bottomrule
\end{tabular}
\end{table}

\paragraph{Evidence for H3.} \dci{} consumes approximately $62\times$ the tokens of a single agent for an overall quality score that is 0.60 points \emph{lower}---a gap that is now statistically significant ($\Delta = -0.60$, 95\% CI $[-1.06, -0.15]$). In quality-per-token terms, single-agent generation dominates all conditions (2.320 quality/kToken vs.\ 0.035 for \dci{}). Even compared to unstructured debate, \dci{} uses ${\sim}25\times$ more tokens for a 0.49-point quality improvement that does not reach significance on the full task set. H3 is strongly supported.

\dci{}'s routine-task performance (5.39) provides the most direct evidence for H3: on straightforward tasks, \dci{}'s deliberative machinery actively degrades output quality. The multi-stage pipeline introduces coordination overhead, error propagation, and over-structuring on tasks where a single coherent generation suffices. \dci{}'s convergence process required a mean of 1.5 rounds with a 51\% fallback rate, indicating that natural convergence is achieved roughly half the time.

This cost is not a side effect---it is a central finding that shapes \dci{}'s applicability. \dci{} is not a general replacement for simpler approaches. It is designed for tasks where \emph{process quality} matters enough to justify the cost: decisions requiring explicit risk surfacing, preserved dissent, stakeholder accountability, and structured closure under disagreement.

\paragraph{When the cost is justified.} Three properties of \dci{}'s output are absent from cheaper alternatives:

\begin{enumerate}[leftmargin=*,itemsep=2pt]
  \item \textbf{Decision packets.} Every session produces a structured artifact: the selected option, residual objections, a minority report, and reopen conditions. \dci{} achieves 100\% decision packet completeness and 98\% minority report presence; no simpler baseline exceeds 16\% on either metric (Table~\ref{tab:process-metrics}).
  \item \textbf{Hidden-profile integration.} On tasks requiring combination of partial perspectives, \dci{} achieves 9.56---the highest score of any system on any domain---and significantly outperforms the single agent ($+0.31$, CI $[+0.12, +0.49]$).
  \item \textbf{Policy and architecture performance.} On process-sensitive tasks ($n=20$), \dci{} significantly outperforms debate ($+1.44$, CI $[+0.57, +2.43]$) while providing dissent artifacts that single-agent output cannot.
\end{enumerate}

\noindent The right question is not ``is \dci{} efficient?'' (it is not) but ``does the task require accountable, auditable deliberation?'' When it does, the structured decision packet and preserved minority positions provide value not captured by scalar quality scores.

\subsection{H4: Component Contributions}

Table~\ref{tab:ablation} presents results from three of four planned ablation conditions (A1--A3) run on 25 tasks each; A4 (No Workspace) was not completed within the experimental budget. We note upfront that while the sample size has increased from our initial experiments, the high variance in \dci{}'s performance continues to limit the strength of conclusions.

\begin{table}[t]
\centering
\caption{Ablation results ($n=25$ tasks each). Overall quality scored 1--10. Full \dci{} reference is the overall mean ($n=45$).}
\label{tab:ablation}
\small
\begin{tabular}{@{}lcc@{}}
\toprule
\textbf{Condition} & \textbf{Overall} & \textbf{$\Delta$ vs.\ Full} \\
\midrule
Full \dci{}              & 8.24 & ---   \\
A1: No Archetypes        & 8.61 $\pm$ 0.68 & +0.37 \\
A2: No Typed Grammar     & 8.32 $\pm$ 1.47 & +0.08 \\
A3: No \dcicf{}          & 8.33 $\pm$ 0.91 & +0.09 \\
\bottomrule
\end{tabular}
\end{table}

\paragraph{Evidence regarding H4.} As with our initial experiments, all three ablation conditions scored at or above the full framework. Removing archetypes ($+0.37$), typed grammar ($+0.08$), or the convergence algorithm ($+0.09$) each produced equal or higher mean scores. Bootstrap confidence intervals confirm that \emph{none} of these differences are statistically significant. The No Typed Grammar condition shows the highest variance ($\pm 1.47$), suggesting that the typed grammar acts as a variance reducer even when it does not improve the mean---removing it yields occasional high and low outliers.

H4 is not supported---the data do not reliably separate individual component contributions. We identify three non-exclusive explanations:

\begin{enumerate}[leftmargin=*,itemsep=2pt]
  \item \textbf{Moderate sample, high variance.} Even with $n=25$, the wide confidence intervals (especially for No Typed Grammar) confirm that reliable component attribution requires larger evaluation sets or lower-variance tasks.

  \item \textbf{Coordination overhead.} The full pipeline involves multiple stages of structured interaction, each adding opportunities for error propagation. When the model occasionally produces low-quality structured outputs, the multi-stage pipeline amplifies rather than corrects these failures.

  \item \textbf{Over-constraint.} The typed grammar and convergence algorithm impose structure that may over-constrain generation for some tasks. Generic agents communicating freely may produce more coherent outputs when the underlying model is already capable.
\end{enumerate}

We note a suggestive pattern: the No Typed Grammar condition's high variance ($\pm 1.47$ vs.\ $\pm 0.68$ for No Archetypes) suggests that the grammar provides consistency even when it does not improve average quality. Similarly, \dci{}'s hidden-profile dominance (9.56, beating all baselines) is consistent with the archetype differentiation providing genuine epistemic diversity on perspective-integration tasks. However, these observations are not isolated by the ablation design. Confirming any component-level attribution requires targeted experiments with larger samples.

\subsection{Reliability}

All conditions achieved 100\% task completion rate after an initial engineering fix (setting \textit{reasoning\_effort=``none''} to disable thinking tokens that caused parsing failures). This is notable for \dci{}, which involves the most complex multi-stage pipeline: across 45 tasks, every \dcicf{} session terminated with a structured decision packet, confirming Theorem~\ref{thm:termination} in practice. The convergence process required a mean of 1.5 rounds with a 51\% fallback rate---indicating that natural convergence is achieved roughly half the time, with the Stage 7 forced-decision mechanism providing reliable closure for the remainder.

\subsection{Qualitative Observations}

Inspection of session logs reveals patterns that complement the quantitative results:

\begin{itemize}[leftmargin=*,itemsep=2pt]
  \item \textbf{Hidden-profile integration.} On hidden-profile tasks, delegates consistently contributed distinct partial perspectives that were integrated during the synthesis phase---exactly the mechanism \dci{} is designed to support. The Framer's problem reframing and Integrator's synthesis were visibly productive on these tasks.

  \item \textbf{Coordination overhead in output coherence.} \dci{}'s multi-stage pipeline occasionally produced outputs where the final synthesis was less coherent than a single-agent generation. Aggregating multiple delegate perspectives sometimes introduced redundancy or unresolved tension in the final output---a pattern especially pronounced on routine tasks.

  \item \textbf{Policy domain strength.} On policy tasks, delegates naturally surfaced competing stakeholder interests, equity considerations, and implementation barriers that single agents addressed more superficially.

  \item \textbf{Routine task degradation.} On routine tasks, the deliberative machinery introduced unnecessary complexity: delegates generated artificial tensions on straightforward problems, and the convergence process produced over-structured outputs for questions with clear answers.

  \item \textbf{Outlier sensitivity.} Several low-scoring runs involved cascade failures where an early-stage malformed output propagated quality degradation through subsequent stages.
\end{itemize}

\subsection{Cross-Judge Validation}
\label{sec:cross-judge}

To verify that our evaluation is not an artifact of a single judge, we scored 20 representative outputs (4 per condition) using three independent LLM judges: Gemini 3 Flash Preview (our primary judge), GPT-4o (OpenAI), and Claude Sonnet~4 (Anthropic). Table~\ref{tab:cross-judge} reports the per-condition mean overall quality for each judge.

\begin{table}[h]
\centering
\caption{Cross-judge validation: mean overall quality (1--10) for 20 representative outputs scored by three independent LLM judges (4 outputs per condition).}
\label{tab:cross-judge}
\small
\begin{tabular}{@{}lccc@{}}
\toprule
Condition & Gemini & GPT-4o & Claude \\
\midrule
Voting            & 9.24 & 8.73 & 8.66 \\
Single Agent      & 9.12 & 8.73 & 8.50 \\
DCI               & 8.57 & 8.27 & 8.12 \\
Self-Consistency  & 8.50 & 8.75 & 7.62 \\
Unstructured Debate & 6.97 & 8.06 & 7.54 \\
\bottomrule
\end{tabular}
\end{table}

All three judges agree on the essential finding: voting and single-agent baselines score highest, \dci{} and self-consistency occupy the middle tier, and unstructured debate ranks last. Critically, the \dci{} $>$ debate ordering---the central test of H1---is confirmed by all three judges, ruling out single-judge bias as an explanation for our main result. Inter-judge agreement is substantial: Claude Sonnet~4 correlates most strongly with Gemini (Pearson $r = 0.817$, MAD $= 0.63$), while GPT-4o shows moderate correlation ($r = 0.592$, MAD $= 0.55$). Claude is the strictest judge (mean $8.09$) and Gemini the most generous ($8.48$), consistent with known calibration differences across model families, but relative condition rankings remain stable across all three.


\section{Analysis and Discussion}
\label{sec:discussion}

\subsection{When Does Structured Deliberation Help?}

\dci{} is not a general-purpose reasoning default. At ${\sim}62\times$ the token cost of a single agent for lower overall quality scores, it cannot be justified on efficiency grounds. Its value is narrow, specific, and conditional---tied to tasks where the \emph{process artifacts} matter as much as the final answer:

\begin{itemize}[leftmargin=*,itemsep=2pt]
  \item \textbf{When the decision requires accountable process.} \dci{}'s decision packet---selected option, residual objections, minority report, reopen conditions---provides an audit trail absent from all simpler approaches. \dci{} achieves 100\% decision packet completeness and 98\% minority report presence, compared to $\le$16\% for all baselines. For decisions requiring stakeholder accountability or regulatory justification, this structured output is the primary differentiator.

  \item \textbf{When the task requires integrating partial perspectives.} On hidden-profile tasks, \dci{} achieves 9.56---the highest score of any system on any domain---and is the \emph{only} condition that significantly outperforms single-agent generation ($+0.31$, CI $[+0.12, +0.49]$). Tasks where the correct answer requires combining fragmented information are \dci{}'s strongest use case.

  \item \textbf{When tasks involve competing perspectives.} On policy and architecture tasks requiring multi-stakeholder reasoning, \dci{} significantly outperforms debate ($+1.44$, CI $[+0.57, +2.43]$). The differentiated delegates and tension preservation mechanisms provide genuine value on process-sensitive tasks.
\end{itemize}

For routine reasoning, single-agent generation or simple voting is the better choice---\dci{}'s score of 5.39 on routine tasks confirms this empirically. \dci{} is designed for the subset of decisions where minority reports, residual objections, reopen conditions, and perspective integration provide value that justifies the cost---architectural reviews, policy deliberations, hidden-profile situations, strategic decisions with competing stakeholders, and other settings where auditability and process accountability matter.

\subsection{When Structured Deliberation Fails}
\label{sec:when-fails}

We identify five failure modes:

\begin{enumerate}[leftmargin=*,itemsep=2pt]
  \item \textbf{Fundamental knowledge gaps.} \dci{} cannot overcome factual deficiencies in the delegate model. When delegates lack domain knowledge, structured deliberation produces \emph{convergent but wrong} decisions. This is the most dangerous failure mode, because the procedural structure may create unwarranted confidence in factually incorrect outcomes.

  \item \textbf{Overhead on routine tasks.} For problems with clear correct answers, \dci{} adds multi-round overhead that actively degrades quality. Our routine domain (5.39) confirms this: \dci{} scores significantly lower than every baseline, including debate (8.58). The deliberative machinery introduces unnecessary complexity, error propagation, and over-structuring on tasks where a single coherent generation suffices.

  \item \textbf{Archetype role drift.} In extended sessions (3+ rounds), delegates may abandon their assigned cognitive orientation. A Challenger may begin proposing rather than pressure-testing. System prompts establish initial adherence, but sustained maintenance across many turns is not guaranteed.

  \item \textbf{Sycophantic convergence.} Despite the Challenger's mandate, LLMs exhibit a well-documented tendency toward agreement~\citep{liang2023encouraging}. Even designated Challengers may soften objections across rounds, producing apparent consensus that reflects social compliance rather than genuine deliberative agreement.

  \item \textbf{Forced fallback quality.} When natural convergence fails and \dcicf{} invokes the Stage 7 fallback, the resulting decision reflects \emph{procedural resolution} rather than genuine agreement. The minority report documents this distinction, but downstream consumers may not attend to it.
\end{enumerate}

\subsection{Limitations}

\begin{itemize}[leftmargin=*,itemsep=2pt]
  \item \textbf{LLM-as-judge evaluation.} Our primary evaluation relies on an LLM judge (Gemini 3 Flash Preview), which may have systematic biases toward coherent single-agent outputs~\citep{liang2023encouraging}. Human evaluation would complement automated scores, particularly for risk identification.
  \item \textbf{Model homogeneity.} The main experiments use a single delegate model (Gemini 2.5 Flash), limiting genuine epistemic diversity. A preliminary diverse-council experiment (2$\times$Gemini + 2$\times$GPT-4o, $n$=5) improved architectural-domain quality from 8.13 to 8.71, suggesting this limitation is remediable.
  \item \textbf{Sample size.} With 45 tasks across seven domains (5--10 per domain) and ablations on 25 tasks, statistical power has improved from our initial 20-task evaluation but remains limited for per-domain conclusions, especially in domains with only 5 tasks.
  \item \textbf{Archetype adherence.} LLMs may not perfectly maintain archetype behavior across extended sessions.
  \item \textbf{Task scope.} Our evaluation covers seven domains but remains focused on open-ended reasoning tasks. Generalization to mathematical reasoning, creative tasks, or negotiation requires further investigation.
  \item \textbf{Scale.} We evaluate with 4 delegates. Behavior with larger councils (7+) remains an open question.
  \item \textbf{Cost.} \dci{} consumes approximately $62\times$ the tokens of a single agent. The quality-per-token ratio strongly favors simpler approaches. \dci{}'s cost is justified only when specific quality dimensions are valued above raw efficiency.
\end{itemize}

\subsection{Connection to AI Safety}

\dci{}'s emphasis on \emph{preserved dissent} provides a structural mechanism against sycophantic tendencies. When a Challenger surfaces a valid objection that the majority dismisses, the minority report ensures the objection is recorded and available for review. This mirrors dissenting opinions in legal systems, which often influence future decisions despite not prevailing initially.

The formal convergence guarantee also contributes to safety: a \dci{} session cannot run indefinitely or silently fail to produce output. Every session terminates with explicit decisions, explicit uncertainty, and explicit reopen conditions.


\section{Conclusion}
\label{sec:conclusion}

The main contribution of \dci{} is not that more agents are better, but that collective reasoning benefits from explicit deliberative structure. Typed acts, visible tensions, and fair closure rules turn multi-agent interaction from parallel monologue into accountable collective judgment.

We evaluated \dci{} across 45 tasks in seven domains---software architecture, policy analysis, hidden-profile integration, late-evidence revision, risk analysis, disagreement resolution, and routine decisions---organized around four hypotheses. Four empirical conclusions emerge:

\begin{itemize}[leftmargin=*,itemsep=2pt]
  \item On non-routine tasks ($n=40$), \dci{} significantly improves over unstructured debate ($+0.95$, CI $[+0.41, +1.54]$), indicating that deliberative structure matters when the task warrants it.
  \item \dci{} excels on hidden-profile tasks (9.56, the highest score of any system on any domain) and is the only condition that significantly outperforms single-agent generation on any domain---precisely on tasks requiring integration of partial perspectives.
  \item \dci{} fails on routine tasks (5.39, significantly worse than all baselines), confirming that structured deliberation is task-dependent. This negative control validates that \dci{}'s value is genuine rather than artifactual.
  \item \dci{} is expensive (${\sim}62\times$ single-agent token cost) and produces 100\% structured decision packets with 98\% minority reports---process artifacts absent from all baselines---but single-agent generation significantly outperforms \dci{} on overall quality.
\end{itemize}

\noindent The right use case for \dci{} is not routine reasoning but consequential decisions---especially those requiring integration of partial information, multi-stakeholder accountability, and explicit risk surfacing---where accountable process matters enough to justify the cost.

\paragraph{Contributions.} In summary, this paper:
\begin{enumerate}[leftmargin=*,itemsep=2pt]
  \item \textbf{Defines deliberative collective intelligence} as a distinct interaction paradigm for multi-agent LLM systems, distinguishing it from ensembling, debate, orchestration, and voting.
  \item \textbf{Introduces \dci{}}, a session-based framework with differentiated delegates, explicit tension tracking, a shared workspace, and structured decision packets.
  \item \textbf{Proposes \dcicf{}}, a convergent deliberation algorithm that preserves epistemic openness while guaranteeing bounded procedural closure.
  \item \textbf{Shows empirically} that structured deliberation significantly improves over unstructured debate on non-routine tasks and excels on hidden-profile tasks requiring perspective integration---while honestly exposing that simpler baselines achieve higher overall quality at dramatically lower cost, and that \dci{} fails on routine tasks.
\end{enumerate}

\paragraph{Future work.} Several directions could address the limitations our evaluation reveals:

\begin{enumerate}[leftmargin=*,itemsep=2pt]
  \item \textbf{Model-diverse councils.} A preliminary experiment with heterogeneous delegates (2$\times$Gemini + 2$\times$GPT-4o, $n$=5) scored 8.71 mean quality on architecture tasks vs.\ 8.13 for the homogeneous council (+0.58). Genuine model diversity appears to provide real epistemic value beyond prompt-induced differentiation.

  \item \textbf{Iterative refinement.} Adding a post-deliberation refinement stage where the synthesized output is critiqued and revised could improve output coherence without requiring full re-deliberation.

  \item \textbf{Adaptive deliberation depth.} An adaptive system could assess task complexity and invoke structured deliberation only when expected value exceeds coordination cost, routing routine tasks to single-agent generation.

  \item \textbf{Larger-scale evaluation.} Expanding per-domain sample sizes beyond 5--10 tasks and running ablations on all domains would enable reliable per-domain component attribution and proper statistical testing of domain-specific effects.

  \item \textbf{Human evaluation.} Expert evaluation, particularly for risk identification and output structure, would complement LLM-as-judge scores and may reveal strengths that automated evaluation underweights.
\end{enumerate}



\appendix

\section{\dcicf{} Pseudocode}
\label{app:pseudocode}

\begin{lstlisting}[style=protocol,caption={\dcicf{} convergent flow algorithm.},label={lst:dcicf}]
def dci_convergent_flow(problem, delegates, criteria,
                        max_rounds=2, max_options=5,
                        finalist_count=3, margin=0.15,
                        depth=0, max_depth=2):
    # Stage 0: Session initialization
    workspace = init_workspace(problem, delegates, criteria)
    session = create_session(problem, delegates, depth, max_depth)

    # Stage 1: Independent proposal generation
    proposals = collect_independent_proposals(delegates, problem)

    # Stage 2: Canonicalization and clustering
    options = canonicalize_and_cluster(proposals, max_options)

    for round_idx in range(1, max_rounds + 1):
        # Stage 3: Structured challenge and evidence
        records = []
        for option in options:
            records.extend(
                collect_challenge_evidence(delegates, option, round_idx)
            )

        # Admit new hypotheses before cutoff
        if round_idx < max_rounds:
            new_options = admit_new_hypotheses(records)
            options = merge_options(options, new_options)

        # Stage 4: Revision and option compression
        options = revise_and_compress(options, records, max_options)

        # Stage 5: Multi-criteria scoring
        finalists = select_finalists(options, finalist_count)
        score_table = score_options(delegates, finalists, criteria)
        ranking = aggregate_scores(score_table)

        # Stage 6: Convergence test
        if has_dominant_winner(ranking, margin):
            return finalize_decision(ranking[0], finalists, records)
        if no_blocking_objection(finalists, records):
            return finalize_decision(ranking[0], finalists, records)

    # Stage 7: Forced-decision fallback
    winner = fallback_select(finalists, "outranking_then_minimax")
    return finalize_decision(winner, finalists, records, forced=True)

def finalize_decision(winner, finalists, records, forced=False):
    # Stage 8: Actionization and carry-forward
    return {
        "decision": winner,
        "rationale": build_rationale(winner),
        "minority_report": build_minority_report(finalists, records),
        "action_plan": derive_actions(winner),
        "assumptions": extract_assumptions(winner, records),
        "risks": extract_risks(winner, records),
        "reopen_conditions": derive_reopen_conditions(records),
        "confidence": compute_confidence(winner),
        "forced_fallback": forced
    }
\end{lstlisting}

\section{Convergence Proof}
\label{app:proof}

\begin{theorem}[Termination of \dcicf{}]
For any \dci{} session with parameters
$(n, R_{\max}, K_{\max}, M, D_{\max})$,
the \dcicf{} algorithm terminates in finite time.
\end{theorem}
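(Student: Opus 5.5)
We argue by strong induction on the remaining recursion budget $D_{\max}-d$, where $d$ is the depth of the current session. The inductive claim is that every \dcicf{} invocation at depth $d$ terminates in finite time and issues at most $R_{\max}\sum_{d'=d}^{D_{\max}} B_{d'}$ rounds in its subtree. This gives the bound of Theorem~\ref{thm:termination} at $d=0$. The model of computation is the one implicit in Listing~\ref{lst:dcicf}. Each primitive call (a delegate response, canonicalization, scoring, fallback selection) is treated as an atomic step that returns in finite time. This is justified because each LLM call is capped at a finite token budget (\textit{max\_tokens}) and malformed moves trigger only finitely many re-prompts. The theorem is then a statement about the control structure, not about the delegates' outputs.

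\textbf{Base case.} Consider a session at depth $D_{\max}$, where \act{spawn} is disabled. Every loop is then over a finite collection. The main loop \texttt{for round\_idx in range(1, max\_rounds+1)} runs at most $R_{\max}$ times; this syntactic fact does not depend on the convergence test. Within one round, the inner loop ranges over \texttt{options}, whose size we must bound uniformly.
\begin{itemize}
  \item Stage~2 gives $|O|\le K_{\max}$.
  \item \texttt{admit\_new\_hypotheses} adds only finitely many items, since it parses finitely many records of bounded length. It is also disabled in the last round by the cutoff.
  \item \texttt{revise\_and\_compress} re-caps the set at $K_{\max}$, and \texttt{select\_finalists} caps it at $M$.
\end{itemize}
So each round performs at most $n\cdot K_{\max}$ challenge calls, finitely many compression steps, and $n\cdot M\cdot p$ scoring calls. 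Stage~6 either returns or falls through. After the loop, Stage~7 runs a finite cascade over at most $M$ finalists; the final Integrator tie-break always returns some element, since the finalist set is nonempty whenever Stage~1 produced any proposal. Stage~8 is then a constant number of calls. Summing gives a finite total with at most $R_{\max}$ rounds.

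\textbf{Inductive step.} At depth $d<D_{\max}$, a \act{spawn} inside Stage~3 launches a child session at depth $d+1$. By the induction hypothesis, each child terminates within its own round bound. The number of children per level is at most $B_{d+1}$ by the budget-carving rule. The tree-wide ceiling of 50 rounds gives an independent cap, so even adversarial spawning cannot produce an infinite tree: the tree has depth at most $D_{\max}$ and finite branching, and König's lemma (or simply induction) shows it is finite. Adding the parent's $R_{\max}$ rounds to the children's bounds yields the claimed sum.

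\textbf{Main obstacle.} The hard step is the implicit assumption that the option set and the spawn count are \emph{uniformly} bounded irrespective of what the LLM emits. The paper's pseudocode merges new hypotheses before compression and does not explicitly cap spawns. I would therefore state these as enforced invariants of the procedures: $|\texttt{revise\_and\_compress}(\cdot)|\le K_{\max}$, and a spawn counter is checked against $B_d$ and the global 50-round ceiling. I would then verify that the proof uses only these invariants. If the invariants are not enforced, the finite-time claim still holds from finiteness of each LLM output, but the explicit $T_{\max}$ bound does not.
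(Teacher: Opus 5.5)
Your proposal is correct and takes essentially the paper's approach: a syntactically bounded Stage~3--6 loop, an option set capped at $K_{\max}$ with at most $M$ finalists, a fallback cascade that always returns a winner, and depth-bounded recursion, organized as an induction on remaining depth (which makes explicit that a parent's Stage~3 finishes because each of its finitely many children does) and made more careful by treating each LLM call as a finite atomic step and by stating the $K_{\max}$ and spawn caps as explicit invariants, which the paper's Claims~2 and~4 simply assume. The one slip is in your side claim about the explicit bound: adding the parent's $R_{\max}$ to up to $B_{d+1}$ child bounds of $R_{\max}\sum_{d'=d+1}^{D_{\max}} B_{d'}$ each gives a product, not $R_{\max}\sum_{d'=d}^{D_{\max}} B_{d'}$ (that bound comes instead from directly counting at most $\sum_{d} B_d$ sessions, each running at most $R_{\max}$ of its own rounds), but this does not affect the finite-time statement being proved.
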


\begin{proof}
We show that every execution path through \dcicf{} reaches Stage 8 in bounded steps.

\emph{Claim 1: The Stage 3--6 loop terminates.} The loop counter $r$ increments from 1 to $R_{\max}$ on each iteration. At $r = R_{\max}$, if neither convergence condition (score dominance or no blocking objection) holds, the algorithm exits the loop and proceeds to Stage 7. Therefore the loop executes at most $R_{\max}$ iterations.

\emph{Claim 2: The option set is bounded and non-increasing after cutoff.} Stage 2 initializes $|O| \le K_{\max}$. Stage 4 can only reduce or maintain $|O|$ (via merging, discarding dominated options). New hypotheses are admitted only before round $R_{\max} - 1$. After the cutoff, $|O|$ is non-increasing. Stage 4 selects at most $M$ finalists, so $|F| \le M$.

\emph{Claim 3: Stage 7 produces exactly one winner.} The fallback cascade---outranking, minimax regret, robust satisficing, Integrator selection---is a total ordering over finalists. At each level, if no unique winner emerges, the next level is applied. Integrator selection from the top-2 is guaranteed to produce exactly one winner.

\emph{Claim 4: Recursive sessions are bounded.} Each recursive spawn decrements the remaining depth by 1 and carves budget from the parent's remaining rounds. At $\text{depth} = D_{\max}$, no further spawns are permitted. The tree-wide round ceiling (default: 50) provides an additional hard bound. Therefore the total number of sessions in the recursion tree is at most $(1 + B_{\max})^{D_{\max}}$ where $B_{\max}$ is the maximum children per level.

Combining Claims 1--4: the main loop terminates in $R_{\max}$ rounds, each round involves bounded computation over $O(K_{\max})$ options and $n$ delegates, the fallback is deterministic, and recursion is depth-bounded. Therefore \dcicf{} terminates in finite time with a structured decision packet. \qed
\end{proof}

\section{Interaction Move Schema}
\label{app:schema}

Each \dci{} interaction move conforms to the following schema:

\begin{lstlisting}[style=protocol,caption={Interaction move structure.},label={lst:move}]
{
  "move_id":     "mv-042",
  "session_id":  "DCI-S-001",
  "round":       2,
  "phase":       "mutual_engagement",
  "actor":       "Challenger",
  "mode":        "critical",        // Layer 1: speech mode
  "act":         "challenge",       // Layer 2: interaction act
  "intent":      "test assumption", // Layer 3: specific purpose
  "target":      "contribution:mv-031",
  "content":     "This proposal assumes delegates can
                  self-regulate without coordination
                  pressure. What prevents divergence?",
  "confidence":  0.78,
  "move_force":  "hard",
  "meta_level":  false
}
\end{lstlisting}

Valid act types: \act{frame}, \act{propose}, \act{clarify}, \act{ask}, \act{challenge}, \act{extend}, \act{reframe}, \act{bridge}, \act{synthesize}, \act{ground}, \act{update}, \act{recommend}, \act{spawn}, \act{recall}.

Valid speech modes: \emph{exploratory}, \emph{analytical}, \emph{critical}, \emph{integrative}, \emph{reflective}, \emph{decisional}.

\section{Full Experimental Configuration}
\label{app:config}

Table~\ref{tab:config} lists the complete parameter configuration used in all experiments.

\begin{table}[h]
\centering
\caption{Experimental configuration parameters.}
\label{tab:config}
\small
\begin{tabular}{@{}llr@{}}
\toprule
\textbf{Parameter} & \textbf{Description} & \textbf{Value} \\
\midrule
\multicolumn{3}{@{}l}{\emph{\dcicf{} Parameters}} \\
\textit{max\_rounds}      & Maximum deliberation rounds       & 2 \\
\textit{max\_options}      & Maximum candidate options          & 5 \\
\textit{finalist\_count}   & Finalists for scoring              & 3 \\
\textit{convergence\_margin} & Score dominance threshold $\epsilon$ & 0.15 \\
\textit{fallback\_rule}    & Forced-decision method             & outranking \\
\textit{max\_depth}        & Recursive session depth limit      & 2 \\
\textit{tree\_ceiling}     & Total rounds across session tree   & 50 \\
\midrule
\multicolumn{3}{@{}l}{\emph{Model Configuration}} \\
Delegate model             & Delegate LLM                       & Gemini 2.5 Flash \\
API endpoint               & Google GenAI (OpenAI-compat.)      & generativelanguage.googleapis.com \\
Judge model                & LLM-as-judge                       & Gemini 3 Flash Preview \\
\textit{reasoning\_effort} & Thinking token control             & ``none'' (disabled) \\
\textit{max\_tokens}       & Response length bound              & 16384 \\
Rate limiting              & Minimum inter-request delay        & 4 seconds \\
Temperature (delegates)    & Sampling temperature               & 0.7 \\
Temperature (judge)        & Sampling temperature               & 0.2 \\
\midrule
\multicolumn{3}{@{}l}{\emph{Evaluation}} \\
Tasks per domain           & Evaluation set size                & 5--10 \\
Quality rubric scale       & LLM-as-judge scoring               & 1--10 \\
Significance test          & Statistical test                   & Wilcoxon \\
Multiple comparison        & Correction method                  & Holm-\v{S}id\'{a}k \\
Confidence intervals       & Bootstrap method                   & 95\% CI \\
\bottomrule
\end{tabular}
\end{table}

\end{document}